\pgfplotsset{compat=1.16}
\newcommand{\nop}[1]{}
\newcommand{\reachablein}[1]{\calR_{#1}}
\newcommand{\precond}[1]{\mathit{pre}({#1})}
\newcommand{\addeffects}[1]{\mathit{add}({#1})}
\newcommand{\deleffects}[1]{\mathit{del}({#1})}
\newcommand{\applyactions}[2]{#1[#2]}
\newenvironment{changemargin}[2]{%
\list{}{\rightmargin#2\leftmargin#1
\parsep=0pt\topsep=0pt\partopsep=0pt}
\item[]}
{\endlist}
\newtheorem{theorem}{Theorem}
\newtheorem{definition}[theorem]{Definition}
\newtheorem{example}[theorem]{Example}
\let\phi\varphi
\let\epsilon\varepsilon
\renewcommand{\models}{\vDash}
\newcommand{\calA}{\mathcal{A}}
\newcommand{\calF}{\mathcal{F}}
\newcommand{\calI}{\mathcal{I}}
\newcommand{\calR}{\mathcal{R}}
\newcommand{\NP}{\ensuremath{\textsc{NP}}}
\newcommand{\SIGMA}[2]{\ensuremath{\Sigma_{\mathit{#2}}^{\mathit{#1}}}}
\newcommand{\PSPACE}{\ensuremath{\textsc{PSpace}}}
\newcommand{\answersets}[1]{\mathit{AS}(#1)}
\newcommand{\eK}{\mathbf{K}\,}
\newcommand{\eM}{\mathbf{M}\,}
\newcommand{\citep}[1]{\citeA{#1}}
\title[Action Reversibility in STRIPS Using ASP and ELP]
{Determining ActionReversibility in STRIPS Using Answer Set and Epistemic
Logic Programming}
\author[W.\ Faber, M.\ Morak, and L.\ Chrpa]
       {WOLFGANG FABER$^a$, MICHAEL MORAK$^a$, and LUK\'{A}\v{S} CHRPA$^b$\\
       $^a$University of Klagenfurt, Austria\\
       $^b$Czech Technical University in Prague\\
       \email{wolfgang.faber@aau.at, michael.morak@aau.at, chrpaluk@fel.cvut.cz}}
\begin{document}

\label{firstpage}

\maketitle

\begin{abstract}
	In the context of planning and reasoning about actions and change, we call an
action reversible when its effects can be reverted by applying other actions,
returning to the original state. Renewed interest in this area has led to
several results in the context of the PDDL language, widely used for describing
planning tasks.

In this paper, we propose several solutions to the computational problem of
deciding the reversibility of an action. In particular, we leverage an existing
translation from PDDL to Answer Set Programming (ASP), and then use several
different encodings to tackle the problem of action reversibility for the STRIPS
fragment of PDDL. For these, we use ASP, as well as Epistemic Logic Programming
(ELP), an extension of ASP with epistemic operators, and compare and contrast
their strengths and weaknesses.

Under consideration for acceptance in TPLP.

	\footnote{This work is based on, and significantly extends, two workshop papers previously published by the authors at ASPOCP'20 and EELP'20 \cite{aspocp:ChrpaFFM20,eelp:FaberM20}.}
\end{abstract}

\begin{keywords}
Action Reversibility, Answer Set Programming, Epistemic Logic Programming, Reasoning about Action and Change
\end{keywords}

\section{Introduction}\label{sec:introduction}

Traditionally, the field of Automated Planning deals with
the problem of generating a sequence of actions---a plan---that transforms an
initial state of the environment to some goal state, see for instance \cite{0014222,GNT2016}. Actions, in plain words,
stand for modifiers of the environment. One interesting question is whether the
effects of an action are reversible (by other actions), or in other words,
whether the action effects can be undone. Notions of reversibility have
previously been investigated, most notably by \cite{japll:EiterEF08} and by \cite{icaps:DaumT0HW16}.

Studying action reversibility is important for several reasons. Intuitively,
actions whose effects cannot be reversed might lead to dead-end states from
which the goal state is no longer reachable. Early detection of a dead-end state
is beneficial in a plan generation process, as shown by \cite{LipovetzkyMG16}. Reasoning in
more complex structures such as Agent Planning Programs~\citep{GiacomoGPSS16},
which represent networks of planning tasks where a goal state of one task is an
initial state of another task is even more prone to dead-ends, as shown by~\cite{ChrpaLS17}.
Concerning non-deterministic planning, for instance Fully Observable
Non-Deterministic (FOND) Planning, where actions have non-deterministic effects,
determining reversibility or irreversibility of each set of effects of the
action can contribute to early dead-end detection, or to generalise recovery
from undesirable action effects, which is important for efficient computation of
strong (cyclic) plans, cf.~\citep{CamachoMM16}. Concerning online planning, we can
observe that applying reversible actions is safe and hence we might not need to
explicitly provide the information about safe states of the
environment~\citep{CsernaDRR18}. Another, although not very obvious, benefit of
action reversibility is in plan optimization. If the effects of an action are
later reversed by a sequence of other actions in a plan, these actions might be
removed from the plan, potentially shortening it significantly. It has been
shown by \cite{ChrpaMO12} that under given circumstances, pairs of inverse actions, which are a
special case of action reversibility, can be removed from
plans.

\cite{MorakCFF20} introduced a general framework for action reversibility
that offers a broad definition of the term, and generalises many of the already
proposed notions of reversibility, like ``undoability'' proposed by
\cite{icaps:DaumT0HW16}, or the concept of
``reverse plans'' as introduced by \cite{japll:EiterEF08}. The concept of reversibility of \cite{MorakCFF20} 
directly incorporates the set of states in which a given action should be
reversible. We call these notions $S$-reversibility and $\phi$-reversibility,
where the set $S$ contains states, and the formula $\phi$ describes a set of
states in terms of propositional logic. These notions are then further
refined to universal reversibility (referring to the set of all states) and to
reversibility in some planning task $\Pi$ (referring to the set of all reachable
states w.r.t.\ the initial state specified in $\Pi$). These last two versions
match the ones proposed by \cite{icaps:DaumT0HW16}. Furthermore, our notions can be further
restricted to require that some action is reversible by a single ``reverse
plan'' that is not dependent of the state for which the action is reversible. For
single actions, this matches the concept of the same name proposed by
\cite{japll:EiterEF08}.

The complexity analysis of \cite{MorakCFF20} indicates that some of these problems can be addressed by means of Answer Set Programming (ASP), but also by means of Epistemic Logic Programs (ELPs). In this paper, we
leverage the translations implemented in plasp~\citep{DimopoulosGLRS19}, and
produce ASP and ELP encodings to effectively solve some of the reversibility problems on PDDL domains,
restricted, for now, to the STRIPS fragment~\citep{Fikes71}. The encodings differ quite a bit concerning their generality and extensibility, and we discuss their advantages and disadvantages.
We also present preliminary experiments that compare the various encodings, highlighting a trade-off between extensibility and efficiency.

%\section{Preliminaries}\label{sec:preliminaries}
\section{Background}\label{sec:preliminaries}

%\todo[inline]{Check and update if necessary.}

\paragraph{STRIPS Planning.}
Let $\calF$ be a set of \emph{facts}, that is, propositional variables describing
the environment, which can either be true or false. Then, a subset $s \subseteq
\calF$ is called a \emph{state}, which intuitively represents a set of facts
considered to be true. An action is a tuple $a = \langle \precond{a},
\addeffects{a}, \deleffects{a} \rangle$, where $\precond{a} \subseteq \calF$ is
the set of \emph{preconditions} of $a$, and $\addeffects{a} \subseteq \calF$ and
$\deleffects{a} \subseteq \calF$ are the add and delete effects of $a$,
respectively. W.l.o.g., we assume actions to be well-formed, that is,
$\addeffects{a} \cap \deleffects{a} = \emptyset$ and $\precond{a} \cap
\addeffects{a} = \emptyset$. An action $a$ is \emph{applicable} in a state $s$
iff $\precond{a} \subseteq s$.  The result of applying an action $a$ in a state
$s$, given that $a$ is applicable in $s$, is the state $\applyactions{a}{s} = (s
\setminus \deleffects{a}) \cup \addeffects{a}$. A sequence of actions $\pi =
\langle a_1, \ldots, a_n \rangle$ is applicable in a state $s_0$ iff there is a
sequence of states $\langle s_1, \ldots, s_n \rangle$ such that, for $0 < i \leq
n$, it holds that $a_i$ is applicable in $s_{i-1}$ and
$\applyactions{a_i}{s_{i-1}} = s_i$. Applying the action sequence $\pi$ on $s_0$
is denoted $\applyactions{\pi}{s_0}$, with $\applyactions{\pi}{s_0} = s_n$. The
\emph{length} of action sequence $\pi$ is denoted $|\pi|$.

A \emph{STRIPS planning task} %(or, simply, \emph{planning task})
$\Pi = \langle \calF, \calA, s_0, G \rangle$
is a four-element tuple consisting of a set of \emph{facts} $\calF = \{f_1, \ldots,
f_n\}$, a set of \emph{actions} $\calA = \{a_1, \ldots, a_m\}$, an
\emph{initial state} $s_0 \subseteq \calF$, and a \emph{goal} $G \subseteq \calF$. A state $s \subseteq \calF$ is a
\emph{goal state (for $\Pi$)} iff $G \subseteq s$. An action sequence $\pi$ is
called a \emph{plan} iff $\applyactions{\pi}{s_0} \supseteq G$.  We further
define several relevant notions w.r.t.\ a planning task $\Pi$. A state $s$ is
\emph{reachable from state $s'$} iff there exists an applicable action sequence
$\pi$ such that $\applyactions{\pi}{s'} = s$. A state $s \in 2^\calF$ is simply
called \emph{reachable} iff it is reachable from the initial state $s_0$. The
set of all reachable states in $\Pi$ is denoted by $\reachablein{\Pi}$. An
action $a$ is \emph{reachable} iff there is some state $s \in \reachablein{\Pi}$
such that $a$ is applicable in $s$.
%A state $s$ is a \emph{dead-end state} iff it is not a goal state and no
%goal state is reachable from $s$.

Deciding whether a STRIPS planning task has a plan is known to be
\PSPACE-complete in general and it is \NP-complete if the length of the plan is polynomially bounded~\citep{ai:Bylander94}.

\paragraph{Epistemic Logic Programs (ELPs) and Answer Set Programming (ASP).} We
assume the reader is familiar with ELPs and will only give a very brief overview
of the core language. For more information, we refer to the original paper
proposing ELPs~\citep{aaai:Gelfond91}, therein named \emph{Epistemic
Specifications}, whose semantics we will use in the present paper.

Briefly, ELPs consist of sets of \emph{rules} of the form
\[
  a_1 \vee \ldots \vee a_n \gets \ell_1, \ldots, \ell_m.
\]
In these rules, all $a_i$ are \emph{atoms} of the form $p(t_1, \ldots, t_n)$,
where $p$ is a predicate name, and $t_1, \ldots, t_n$ are terms, that is, either
variables or constants. Each $\ell$ is either an objective or subjective
literal, where objective literals are of the form $a$ or $\neg a$ ($a$ being an atom), and subjective literals are of the form $\eK l$ or $\neg \eK l$, where $l$
is an objective literal. Note that often the operator $\eM$ is also used, which
we will simply treat as a shorthand for $\neg \eK \neg$.

The domain of constants in an ELP $P$ is given implicitly by the set of all
constants that appear in it. Generally, before evaluating an ELP program,
variables are removed by a process called \emph{grounding}, that is, for every
rule, each variable is replaced by all possible combination of constants, and
appropriate ground copies of the rule are added to the resulting program
$\mathit{ground}(P)$. In practice, several optimizations have been implemented
in state-of-the-art systems that try to minimize the size of the grounding.

The result of a (ground) ELP program $P$ is calculated as follows~\citep{aaai:Gelfond91}. An \emph{interpretation} $I$ is a set of ground atoms
appearing in $P$. A set of interpretations $\calI$ satisfies a subjective
literal $\eK l$ (denoted $\calI \models \eK l$) iff the objective literal $l$ is
satisfied in all interpretations in $\calI$. The \emph{epistemic reduct}
$P^\calI$ of $P$ w.r.t.\ $\calI$ is obtained from $P$ by replacing all
subjective literals $\ell$ with either $\top$ in case where $\calI \models
\ell$, or with $\bot$ otherwise. $P^\calI$, therefore, is an ASP program, that
is, a program without subjective literals. The solutions to an ELP $P$ are
called \emph{world views}.  A set of interpretations $\calI$ is a world view of
$P$ iff $\calI = \answersets{P^\calI}$, where
$\answersets{P^\calI}$ denotes the set of stable models (or answer sets) of the
logic program $P^\calI$ according to the semantics of answer set programming~\citep{ngc:GelfondL91}. Checking whether a world view exists for an ELP is known
to be \SIGMA{P}{3}-complete in general, as shown by \cite{birthday:Truszczynski11}.

\section{Reversibility of Actions}\label{sec:reversibility}

In this section, we focus on the notion of uniform reversibility, which is a subclass of action reversibility as explained in detail by \cite{MorakCFF20}.
Intuitively, we call an action reversible if there is a way to undo all the
effects that this action caused, and we call an action \emph{uniformly
reversible} if its effects can be undone by a single sequence of actions
irrespective of the state where the action was applied.

While this intuition is fairly straightforward, when formally defining this
concept, we also need to take several other factors into account---in
particular, the set of possible states where an action is considered plays
an important role~\citep{MorakCFF20}.

\begin{definition}\label{def:uniformreversibility}
  Let $\calF$ be a set of facts, $\calA$ be a set of actions, $S \subseteq 2^\calF$ be 
  a set of states, and $a \in \calA$ be an action. We call $a$
  \emph{uniformly $S$-reversible} iff there exists a sequence of actions $\pi = \langle a_1,
  \ldots, a_n \rangle \in \calA^n$ such that for each $s \in S$ wherein $a$ is
  applicable it holds that $\pi$ is applicable in $\applyactions{a}{s}$ and
  $\applyactions{\pi}{\applyactions{a}{s}} = s$.
\end{definition}

The notion of uniform reversibility in the
most general sense does not depend on a concrete STRIPS planning task, but only
on a set of possible actions and states w.r.t.\ a set of facts. Note that the
set of states $S$ is an explicit part of the notion of uniform $S$-reversibility.

Based on this general notion, it is then possible to define several concrete
sets of states $S$ that are useful to consider when considering whether an
action is reversible.
For instance, $S$ could be defined via a propositional formula over the
facts in $\calF$.
Or we can consider a set of all possible states ($2^\calF$) which gives us
a notion of universal reversibility that applies to all possible planning
tasks that share the same set of facts and actions (i.e., the tasks that
differ only in the initial state or goals).
Or we can move our attention to a specific STRIPS instance and ask whether
a certain action is uniformly reversible for all states reachable from the
initial state.

\begin{definition}\label{def:uniformreversibility:names}
  Let $\calF$, $\calA$, $S$, and $a$ be as in
  Definition~\ref{def:uniformreversibility}. We call the action $a$
  \begin{enumerate}
    \item \emph{uniformly $\phi$-reversible} iff $a$ is uniformly $S$-reversible in the set $S$ of
      models of the propositional formula $\phi$ over $\calF$;
    \item \emph{uniformly reversible in $\Pi$} iff $a$ is uniformly $\calR_\Pi$-reversible for some
      STRIPS planning task $\Pi$; and
    \item \emph{universally uniformly reversible}, or, simply, \emph{uniformly reversible}, iff $a$
      is uniformly $2^\calF$-reversible.
  \end{enumerate}
\end{definition}

%At this point, it is also worth noting that our definition of
%\emph{reversibility in $\Pi$}, for a STRIPS instance $\Pi$, coincides with the
%notion of ``undoability'' as defined in
%\cite{icaps:DaumT0HW16}, and our notion of
%\emph{reversibility} coincides with their notion of ``universal undoability.''

Given the above definitions, we can already observe some interrelationships. In
particular, universal uniform  reversibility (that is, uniform
reversibility in the set of all possible
states) is obviously the strongest notion, implying all the other, weaker
notions. It may be particularly important when one wants to establish
uniform reversibility
irrespective of the concrete STRIPS instance. On the other hand,
$\phi$-reversibility may be of particular interest when $\phi$ encodes the
natural domain constraints for a given planning task.
%Formally stated in the following proposition, it follows straightforwardly
%from the definitions of reversibility.

%\begin{proposition}\label{prop:reversibility:relationships}
  %
%  Let $\calF$, $\calA$, and $a$ be as in Definition~\ref{def:reversibility}.
%  Then, reversibility of $a$ implies $S$-reversibility for any set $S \subseteq
%  2^\calF$ of states, and further implies reversibility in $\Pi$ for any STRIPS
%  planning task $\Pi$ with facts $\calF$ and actions $\calA$.
  %
%\end{proposition}

%Note that all notions of reversibility of some action $a$ proposed so far simply
%require that for any state $s$ there exists a sequence of actions that undoes
%the effects of $a$ after application to $s$. However, sometimes, it may be
%useful to look at a set of actions and recognize that some sequence of actions
%in that set
%always undoes the effect of some action $a$, independent of the state $s$ in
%which $a$ was applied. This leads to the more restrictive notion of
%\emph{uniform reversibility}.

%\begin{definition}\label{def:uniformreversibility}
  %
%  Let $\calF$, $\calA$, $S$, and $a$ be as in
%  Definition~\ref{def:reversibility}.  We call $a$ \emph{uniformly
%  $S$-reversible} iff there exists a sequence of actions $\pi = \langle a_1,
%  \ldots, a_n \rangle \in \calA^n$ such that for each $s \in S$ wherein $a$ is
%  applicable it holds that $\pi$ is applicable in $\applyactions{a}{s}$ and
%  $\applyactions{\pi}{\applyactions{a}{s}} = s$.
  %
%  The more specific notions of reversibility of
%  Definition~\ref{def:reversibility:names} analogously apply to uniform
%  reversibility.
  %
%\end{definition}

The notion of uniform reversibility naturally gives rise to the notion of the
reverse plan. We say that some action $a$ has an \emph{($S$-)reverse plan} $\pi$
iff $a$ is uniformly ($S$-)reversible using the sequence of actions $\pi$. It is
interesting to note that this definition of the reverse plan based on uniform
reversibility now coincides with the same notion as defined by
\cite{japll:EiterEF08}. Note, however, that
in that paper the authors use a much more general planning language. %, and hence,
%unfortunately, we cannot, in general, inherit concrete results from their work
%for our setting.

Even if the length of the reverse plan is polynomially bounded, the problem of
deciding whether an action is uniformly ($\phi$-)reversible is intractable. In
particular, deciding whether an action is universally uniformly reversible
(resp. uniformly $\phi$-reversible) by a polynomial length reverse plan is
NP-complete (resp. in $\SIGMA{P}{2}$)~\citep{MorakCFF20}.

\section{Methods}\label{sec:methods}

After reviewing the relevant features of \emph{plasp}, described by \cite{DimopoulosGLRS19}, in
Section~\ref{sec:plasp}, we present our encodings for determining reversibility
in Section~\ref{sec:asprev}.

\subsection{The \emph{plasp} Format}\label{sec:plasp}

The system \emph{plasp}, described by \cite{DimopoulosGLRS19}, transforms PDDL domains and
problems into facts. Together with suitable programs, plans can then be computed
by ASP solvers---and hence also by ELP solvers, since ELPs are a superset of ASP
programs.
Given a STRIPS domain with facts $\calF$ and actions $\calA$, the following relevant facts and rules will be created by  \emph{plasp}:

\begin{itemize}
  \item \verb!variable(variable("f")).! for all \verb!f! $\in \calF$
  \item \verb!action(action("a")).! for all \verb!a! $\in \calA$
  \item \verb!precondition(action("a"),variable("f"),value(variable("f"),true))!\\
 \verb!:- action(action("a")).!\\ for each \verb!a! $\in \calA$ and \verb!f! $\in \precond{\mbox{a}}$
\item \verb!postcondition(action("a"),effect(unconditional),variable("f"),!\\
 \verb!              value(variable("f"),true)) :- action(action("a")).!\\ for each \verb!a! $\in \calA$ and \verb!f! $\in \addeffects{\mbox{a}}$
\item \verb!postcondition(action("a"),effect(unconditional),variable("f"),!\\
  \verb!              value(variable("f"),false)) :- action(action("a")).!\\ for each \verb!a! $\in \calA$ and \verb!f! $\in \deleffects{\mbox{a}}$
\end{itemize}

In addition, a predicate \verb!contains! encodes all possible values for a given
variable (for STRIPS, this is either true or false).

\begin{example}
  
The STRIPS domain with $\calF=\{f\}$ and actions $del$-$f=\langle\{f\},\emptyset,\{f\}\rangle$ and $add$-$f=\langle\emptyset,\{f\},\emptyset\rangle$ is written in PDDL as follows:

{\small
\begin{verbatim}
  (define (domain example1)
  (:requirements :strips)
  (:predicates (f) )
  (:action del-f
   :precondition (f)
   :effect (not (f)))
  (:action add-f
   :effect (f)))
\end{verbatim}
}

\emph{plasp} translates this domain to the following rules (plus a few technical facts and rules):

{\small
\begin{verbatim}
  variable(variable("f")).
  action(action("del-f")).
  precondition(action("del-f"), variable("f"), value(variable("f"), true)) :-
                action(action("del-f")).
  postcondition(action("del-f"), effect(unconditional), variable("f"),
                value(variable("f"), false)) :- action(action("del-f")).
  action(action("add-f")).
  postcondition(action("add-f"), effect(unconditional), variable("f"),
                value(variable("f"), true)) :- action(action("add-f")).
\end{verbatim}
}
\end{example}

\subsection{Reversibility Encodings using ASP and ELPs}\label{sec:asprev}

In this section, we present our ASP and ELP encodings for checking whether, in a
given domain, there is an action that is uniformly reversible. As we have seen
in Section~\ref{sec:plasp}, the \emph{plasp} tool is able to rewrite STRIPS
domains into ASP rules even when no concrete planning instance for that domain
is given.  We will present two encodings, one for (universal) uniform
reversibility, and one that can be used for uniform $\phi$-reversibiliy.

Note that \emph{universal} uniform reversibility is computationally easier than
$\phi$-uniform reversibility (under standard complexity-theoretic assumptions).
For a given action (and polynomial-length reverse plans), the former can be
decided in \NP, while the latter is harder (Theorem~18 and~20 in \cite{MorakCFF20}).
We will hence start with the encoding for the former problem, which follows a
standard guess-and-check pattern.

\subsubsection{Universal Uniform Reversibility}

The encodings are based on \verb!sequential-horizon.lp! in the \emph{plasp} distribution. 

\paragraph{ELP Encoding.} As a ``database'' the encoding
takes the output of \emph{plasp}'s translate action (for details, see~\citep{DimopoulosGLRS19}). The
problem can be solved in \NP\ due to the following Observation (*): in any
(universal) reverse plan for some action $a$, it is sufficient to consider only
the set of facts that appear in the precondition of $a$. If any action in a
candidate reverse plan $\pi$ for $a$ (resp.\ $a$ itself) contains any other fact
than those in $\precond{a}$, then $\pi$ cannot be a reverse plan for $a$ (resp.\
$a$ is not uniformly reversible), see Theorem~18 in \cite{MorakCFF20} or Theorem~3 in \cite{ChrpaFM21}. With this
observation in mind, we can now describe the (core parts of) our
encodings\footnote{The full encodings are available here:
\url{https://seafile.aau.at/d/cd4cb0d65d124a619920/}.}. We start with our ELP
encoding and will explain later how to modify it to obtain a plain ASP encoding. We should note that here the epistemic operators are used in a way as choices are used in ASP. We did this in order to understand the computational overhead of using ELP rather than ASP, but also in preparation for the uniform $\phi$-reversibility encoding.

The ELP encoding makes use of the following main predicates (in addition to
several auxiliary predicates, as well as those imported from \emph{plasp}):
\begin{itemize}
  \item \verb!chosen/1! encodes the action to be tested for reversibility.
  \item \verb!holds/3! encodes that some fact (or variable, as they are called
    in \emph{plasp} parlance) is set to a certain value at a given time step.
  \item \verb!occurs/2! encodes the candidate reverse plan, saying which action
    occurs at which time step.
\end{itemize}

With the intuitive meaning of the predicates defined, we first choose a single
action from the available actions and set the initial state as the facts in the
precondition of the chosen action. The first two lines partition the actions into chosen and unchosen ones; since it is a ``modal guess,'' there will be one world view for each partition. The third line makes sure that there is at most one chosen action, and lines 4 and 5 enforce at least one chosen action. The last rule says, in line with the Observation (*)
above, that only those variables in the precondition are relevant to check for a
reverse plan.
\small
\begin{verbatim}
chosen(A) :- action(action(A)), not &k{-chosen(A)}.
-chosen(A) :- action(action(A)), not &k{ chosen(A)}.
:- chosen(A), chosen(B), A!=B.
onechosen :- chosen(A).
:- not onechosen.

holds(V, Val, 0) :-
  chosen(A), precondition(action(A), variable(V), value(variable(V), Val)).
relevant(V) :- holds(V, _, 0).
\end{verbatim}
\normalsize

These rules set the stage for the inherent planning problem to be solved to find
a reverse plan. In fact, from the initial state guessed above, we need to find a
plan $\pi$ that starts with action $a$ (the chosen action), such that after
executing $\pi$ we end up in the initial state again. Such a plan is a
(universal) reverse plan. This idea is encoded in the following:
\small
\begin{verbatim}
time(0..horizon+1).

occurs(A, 1) :- chosen(A).
occurs(A, T) :- action(action(A)),time(T), T > 1, not &k{-occurs(A, T)}.
-occurs(A, T) :- action(action(A)),time(T), T > 1, not &k{occurs(A, T)}.

:- occurs(A,T), occurs(B,T), A!=B.
oneoccurs(T) :- occurs(A,T), time(T), T > 0.
:- time(T), T>0, not oneoccurs(T).

caused(V, Val, T) :-
  occurs(A, T), postcondition(action(A), _, variable(V), value(variable(V), Val)).
modified(V, T) :- caused(V, _, T).

holds(V, Val, T) :- caused(V, Val, T).
holds(V, Val, T) :- holds(V, Val, T - 1), not modified(V, T), time(T).
\end{verbatim}
\normalsize

The above rules guess a potential plan $\pi$ using the same technique as above, and then
execute the plan on the initial state (changing facts if this is caused by the
application of a rule, and keeping the same facts if they were not modified).
Finally, we simply need to check that the plan is (a) executable, and (b) leads
from the initial state back to the initial state. This can be done with the
following constraints:

\small
\begin{verbatim}
:- occurs(A, T), precondition(action(A), variable(V), value(variable(V), Val)),
   not holds(V, Val, T - 1).

:- occurs(A, T), precondition(action(A), variable(V), _), not relevant(V).
:- occurs(A, T), postcondition(action(A), _, variable(V), _), not relevant(V).

noreversal :- holds(V, Val, 0), not holds(V, Val, H+1), horizon(H).
noreversal :- holds(V, Val, H+1), not holds(V, Val, 0), horizon(H).
:- not &k{ ~ noreversal}.
\end{verbatim}
\normalsize

The first rule checks that rules in the candidate plan are actually applicable.
The next two check that the rules do not contain any facts other than those that
are relevant (cf.\ observation (*) above). Finally, the last three rules make
sure that at the maximum time point (i.e.\ the one given by the externally
defined constant ``horizon'') the initial state and the resulting state of plan
$\pi$ are the same. It is not difficult to verify that any world view of the
above ELP (combined with the \emph{plasp} translation of a STRIPS problem
domain) will yield a plan $\pi$ (encoded by the \verb!occurs! predicate) that
contains the sequence of actions $a, a_1, \ldots, a_n$, where $a_1, \ldots, a_n$
is a (universal) reverse plan for the action $a$ (each world view consists of
precisely one answer set). Note that our encoding yields reverse plans of length
exactly as long as set in the ``horizon'' constant. One could for instance employ an iterative deepening approach for determining the shortest reverse plans in case the plan length is not known or fixed. This completes our ELP
encoding for the problem of deciding universal uniform reversibility.

We can show that the encoding indeed leads to the correct result:

\begin{theorem}\label{thm:elpcorrect}
  Given a STRIPS planning task $\Pi = \langle \calF, \calA, s_0, G \rangle$, the
  ELP encoding in this section, when applied to $\Pi$, produces exactly one
  world view for each universally uniformly reversible action $a \in \calA$ and
  reverse plan $\pi$ of length \verb!horizon! for $a$.
\end{theorem}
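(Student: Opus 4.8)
The plan is to prove a bijection between the world views of the combined program $P$ — the ELP encoding of this section together with \emph{plasp}'s translation of $\Pi$ — and the pairs $(a,\pi)$ in which $a\in\calA$ is universally uniformly reversible and $\pi$ is a reverse plan for $a$ whose length equals the value $h$ of the \texttt{horizon} constant. The first and, in my view, central step is to determine the shape of every world view. Every subjective literal of $P$ occurs only in the ``modal guess'' idiom: paired rules $p \gets \ldots,\ \eneg\,\eK{\neg p}$ and $\neg p \gets \ldots,\ \eneg\,\eK{p}$ for $p\in\{\mathtt{chosen}(A),\mathtt{occurs}(A,T)\}$, plus the single constraint forcing it to be \emph{known} that \texttt{noreversal} is underivable. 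Consequently, in any epistemic reduct $P^{\calI}$ all subjective literals have been replaced by $\top$ or $\bot$, so $P^{\calI}$ is an ordinary ASP program in which every guess is already resolved; the surviving rules (\texttt{holds}, \texttt{caused}, \texttt{modified}, \texttt{relevant}, \texttt{onechosen}, \texttt{oneoccurs}) are stratified and deterministic once \texttt{chosen} and \texttt{occurs} are fixed, so $P^{\calI}$ has at most one answer set. Since a world view must satisfy $\calI=\answersets{P^{\calI}}$ while being consistent, every world view is a singleton $\calI=\{I\}$. This is exactly the formal content of the informal claim that ``each world view consists of precisely one answer set,'' and it is the place where the ELP semantics must be argued carefully.

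Given $\calI=\{I\}$, I would next read off the combinatorial data. The at-most-one and at-least-one constraints on \texttt{chosen} force a unique action $a$ with $\mathtt{chosen}(a)\in I$; the rule $\mathtt{occurs}(A,1)\gets\mathtt{chosen}(A)$ and the per-step exactness constraints then force exactly one action at each time step $1,\dots,h+1$, with step $1$ being $a$ and steps $2,\dots,h+1$ defining a sequence $\pi=\langle a_1,\dots,a_h\rangle$ of length $h$. A short induction on $T$ shows that the \texttt{holds} atoms faithfully simulate STRIPS execution: the set $\{V:\mathtt{holds}(V,\mathit{true},T)\in I\}$ is exactly the state reached by applying $a,a_1,\dots$ through step $T$ from the initial state $\precond{a}$ fixed at time $0$, with the frame rule supplying inertia and \texttt{caused}/\texttt{modified} supplying the add/delete effects. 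The applicability constraint guarantees executability of $a,a_1,\dots,a_h$, while the \texttt{noreversal} check guarantees that the state at step $h+1$ coincides with the state at step $0$, i.e.\ that $\applyactions{\pi}{\applyactions{a}{\precond{a}}}=\precond{a}$.

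The remaining conceptual step is to lift this ``works on the single state $\precond{a}$'' statement to genuine universal uniform reversibility, and this is precisely what Observation~(*) supplies: the two relevance constraints force every action occurring in $a,\pi$ to mention only facts of $\precond{a}$, and under that restriction the property of mapping $\precond{a}$ back to itself is equivalent to $\pi$ being a reverse plan for $a$ in \emph{every} state $s\supseteq\precond{a}$. Hence each world view yields a valid pair $(a,\pi)$. For the converse I would take an arbitrary valid pair $(a,\pi)$, define the candidate interpretation $I$ by the \texttt{chosen}, \texttt{occurs}, \texttt{relevant}, and \texttt{holds}/\texttt{caused}/\texttt{modified} atoms obtained from executing $a,\pi$ on $\precond{a}$, and verify that $\{I\}$ is a fixpoint of $\calI\mapsto\answersets{P^{\calI}}$, again invoking Observation~(*) to discharge the relevance and reversal constraints. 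Injectivity and surjectivity are then immediate, since distinct chosen actions or distinct \texttt{occurs} sequences produce distinct interpretations and hence distinct world views, and every world view arises from its own extracted pair. I expect the only genuinely delicate bookkeeping to be the inductive correctness of the \texttt{holds} simulation and the time-index accounting — that a length-$h$ reverse plan occupies steps $2,\dots,h+1$ after the chosen action at step $1$.
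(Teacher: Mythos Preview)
Your proposal is correct and follows essentially the same approach as the paper's proof sketch: both argue that the modal guesses on \texttt{chosen} and \texttt{occurs} pin down a single candidate per $(a,\pi)$, that the resulting epistemic reduct is deterministic so each world view is a singleton, that the \texttt{holds}/\texttt{caused} rules simulate execution from $\precond{a}$, and that Observation~(*) (Theorem~18 of \cite{MorakCFF20}) lifts the single-state check to universal uniform reversibility. Your version is simply more explicit---you spell out the bijection in both directions and flag the inductive execution-correctness and time-index bookkeeping, whereas the paper leaves these as ``by construction.''
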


\begin{proof}[Proof (Sketch).]
  We will show that, for each such action $a$ and reverse plan $\pi$, there
  exists exactly one world view $\calI$, such that every answer set in $\calI$
  contains the facts \verb!chosen(a)! and \verb!occurs(a', i)! for each action
  $a' \in \pi$, where $a'$ is the $(i-1)$-th action in $\pi$. This follows by
  construction:
  
  The rules deriving the \verb!chosen! and \verb!occurs! predicates, together
  with the constraints that follow, ensure that there is exactly one world view
  candidate per choosable action and candidate reverse plan of length
  \verb!horizon!. Because of Theorem~18 in \cite{MorakCFF20}, for universal
  uniform reversibility we only need to check a single starting state, and hence
  each world view candidate $\calI$ has at most one answer set $M$, i.e.\ $\calI
  = \{ M \}$.

  The rules deriving the \verb!holds! and \verb!caused! predicates then execute
  action $a$ and the reverse plan, keeping track of which value each variable
  has after each step (represented by time points \verb!T!).  Finally, $M$ is
  eliminated as an answer set in case where some action $a'$ in the reverse plan
  is not applicable or if $a'$ ``touches'' a variable that does not occur in the
  precondition of the chosen action $a$ (encoded in the predicate
  \verb!relevant!). The latter check is, again, correct because of Observation
  (*). The final three rules ensure that, in any world view, no answer set can
  contain the fact \verb!noreversal!, which is true if and only if some variable
  in the initial state (time point \verb!0!) has a different value from the
  final state (time point \verb!horizon + 1!).

  Hence, in any remaining world view $\calI = \{ M \}$, $M$ contains precisely a
  chosen action $a$, a reverse plan $\pi$ of length \verb!horizon! inside
  the \verb!occurs! predicate, and the intermediate states at each time
  step after the successful and valid application of action $a$ or actions
  from $\pi$, starting at some initial state that equals the final state. But
  this is precisely a reverse plan for $a$ of length \verb!horizon!, as desired.
\end{proof}

\paragraph{ASP Encoding.} Now, to see how the same thing can be achieved using
ASP, we can modify the encoding above as follows, yielding an encoding that
guarantees that every answer set represents a possible uniform reverse plan.
Firstly, in order to choose the action to reverse, the first five rules of the
ELP encoding can be replaced by a simple choice rule:

\begin{verbatim}
1 {chosen(A) : action(action(A))} 1.
\end{verbatim}

Similarly, the rules that chose, for each time step, an action (via the
\verb!occurs! predicate), can be replaced with a choice rule as follows:

\begin{verbatim}
1 {occurs(A, T) : action(action(A))} 1 :- time(T), T > 1.
\end{verbatim}

Finally, the check that no reversal exists (represented by the \verb!noreversal!
atom in the ELP encoding) can be encoded in ASP using simple constraints:

\begin{verbatim}
:- holds(V, Val, 0), not holds(V, Val, horizon+1).
:- holds(V, Val, horizon+1), not holds(V, Val, 0).
\end{verbatim}

This completes the ASP encoding, which now does not contain any subjective
literals. It can be seen that, whereas the ELP encoding generates one world view
per uniform reverse plan (by doing all the guesses via subjective literals), the
ASP encoding will generate one answer set per such plan:

\begin{theorem}\label{thm:aspcorrect}
  Given a STRIPS planning task $\Pi = \langle \calF, \calA, s_0, G \rangle$, the
  ASP encoding in this section, when applied to $\Pi$, produces exactly one
  answer set for each universally uniformly reversible action $a \in \calA$ and
  reverse plan $\pi$ of length \verb!horizon! for $a$.
\end{theorem}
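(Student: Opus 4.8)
The plan is to reduce the claim to the already-established ELP result (Theorem~\ref{thm:elpcorrect}) by exhibiting a bijection between the world views of the ELP encoding and the answer sets of the ASP encoding. The crucial enabling observation, already noted in the proof of Theorem~\ref{thm:elpcorrect}, is that every world view $\calI$ of the ELP encoding is a singleton $\calI = \{M\}$: by Theorem~18 in \cite{MorakCFF20}, universal uniform reversibility need only be checked at a single starting state, so the only nondeterminism stems from the modal guesses, each of which is resolved to a unique answer set within its world view. This singleton property is precisely what allows modal guesses to be matched against ordinary choices, and it is the hinge of the whole argument.

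First I would argue that the two ASP choice rules---\verb!1 {chosen(A) : action(action(A))} 1.! selecting the chosen action, and \verb!1 {occurs(A, T) : action(action(A))} 1 :- time(T), T > 1.! selecting, for each time step, the action recorded by \texttt{occurs}---enumerate exactly the same candidate configurations as the five modal-guess rules and the \texttt{occurs}-guessing rules of the ELP encoding. In the ELP, each admissible chosen/unchosen partition together with each assignment of an action to every time point yields a separate world view; in the ASP encoding the same configuration yields a separate answer set, since a cardinality-constrained choice rule \verb!1 {...} 1! commits to exactly one atom per instantiation. The remaining deterministic rules computing \texttt{holds}, \texttt{caused}, \texttt{modified}, and \texttt{relevant}, together with the applicability and relevance constraints, are textually identical in both encodings, so a fixed guess forces the same intermediate states and eliminates the same invalid candidates on both sides.

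The main step---and the one requiring the most care---is to verify that replacing the ELP's \texttt{noreversal} machinery by the two ASP constraints preserves exactly which candidates survive. In the ELP, \texttt{noreversal} is derived inside an answer set precisely when the initial state (time point \verb!0!) and the final state (time point \verb!horizon+1!) disagree on some variable, and the concluding subjective rules discard any world view some of whose answer sets derive \texttt{noreversal}. Because each world view is the singleton $\{M\}$, this discards the world view exactly when $M$ itself derives \texttt{noreversal}; and the two ASP constraints comparing \verb!holds(V, Val, 0)! with \verb!holds(V, Val, horizon+1)! discard an answer set under exactly the same condition. I expect this to be the delicate part, since one must confirm that the epistemic check genuinely collapses to the plain constraint semantics here---a collapse that hinges entirely on the singleton property---and that moving the comparison out of the epistemic reduct introduces no spurious interaction with the remaining guesses.

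Finally, combining these observations gives a bijection between the surviving world views $\{M\}$ of the ELP encoding and the surviving answer sets $M$ of the ASP encoding, each carrying the same \texttt{chosen} atom and the same \texttt{occurs}-encoded reverse plan of length \verb!horizon!. Invoking Theorem~\ref{thm:elpcorrect}, which identifies these world views with pairs $(a,\pi)$ of a universally uniformly reversible action and a reverse plan of length \verb!horizon!, then transports the one-per-pair counting directly to the ASP answer sets, yielding the claim.
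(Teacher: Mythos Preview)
Your proposal is correct, but it takes a genuinely different route from the paper. The paper's proof is a direct argument about the ASP encoding: the choice rules generate one candidate answer set per combination of chosen action and candidate reverse plan, and the constraints (applicability, relevance via Observation~(*), and equality of initial and final state) prune exactly those candidates that are not valid reverse plans; hence the surviving answer sets are in one-to-one correspondence with pairs $(a,\pi)$. It mirrors the \emph{structure} of the ELP proof but does not formally reduce to it.

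You instead establish a bijection between the world views of the ELP encoding and the answer sets of the ASP encoding, and then transport the counting statement from Theorem~\ref{thm:elpcorrect}. The hinge you identify---that every ELP world view here is a singleton, so the epistemic constraint on \texttt{noreversal} collapses to an ordinary integrity constraint---is exactly right and is what makes the bijection work. Your approach buys economy (you do not re-argue correctness of the deterministic execution and pruning rules) at the price of having to justify the bijection carefully, especially the semantic match between modal guesses and cardinality choices. The paper's direct approach avoids any dependence on ELP semantics altogether, which is arguably cleaner for a reader interested only in the ASP result, but at the cost of essentially repeating the argument of Theorem~\ref{thm:elpcorrect} in a different formalism.
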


\begin{proof}[Proof (Idea).]
  The proof proceeds in a similar fashion to the proof of
  Theorem~\ref{thm:elpcorrect}. In particular, now the actions are not guessed
  via a world view, but directly inside the answer set, via the appropriate
  choice rules. Hence, candidate answer sets contain all combinations of chosen
  actions and reverse plan candidates. Via the constraints, any answer set where
  the actions in the reverse plan don't follow the conditions of Observation
  (*), or where they do not lead back to the original state, are eliminated,
  leaving only answer sets that contain chosen actions together with valid
  reverse plans for them, as desired.
\end{proof}

\paragraph{Comparison.} The ASP encoding is a fairly straightforward
guess-and-check program, as the underlying problem of deciding universal uniform
action reversibility is only \NP-complete \citep{MorakCFF20}. In this case, it
could be argued that the choice rules employed there are a more natural encoding
than guessing via the modal operators of ELPs. However, in terms of contrasting
the expressiveness of the two languages, we feel that, still, it is interesting
to see how ``simple'' \NP-complete problems can be encoded using the modal
operators of ELPs, as this may lead to further improvements of the modelling
capabilities of the ELP language in the future. It also stands to reason that,
in the future, ELP solvers should aim to provide some syntactic sugar for these
modal operators for guess-and-check programs, similar to how choice rules are
provided by modern ASP solvers.

\subsubsection{Other Forms of Uniform Reversibility}

\paragraph{ELP Encoding.} Using a similar
guess-and-check idea as in the previous encodings, we can also check for uniform
reversibility for a specified set of states (that is, uniform
$S$-reversibility). Generally, the set $S$ of relevant states is encoded in some
compact form, and our encoding therefore, intentionally, does not assume
anything about this representation, but leaves the precise checking of the set
$S$ open for implementations of a concrete use case. The predicates used in this
more advanced encoding are similar to the ones used in the previous for the
universal case above, and hence we will not list them here again. However, in
order to encode the for-all-states check (i.e.\ the check that the candidate
reverse plan works in \emph{all} states inside the set $S$), we now need our
world views to contain multiple answer sets: one for each state in $S$. We again
start off with the ELP encoding. However, this time, we will see afterwards that
there is no easy modification to immediately obtain an ASP encoding, but the two
differ substantially\footnote{The full encodings
can be found at \url{https://seafile.aau.at/d/cd4cb0d65d124a619920/}.}.

The ELP encoding starts off much like the previous one:
\small
\begin{verbatim}
chosen(A) :- action(action(A)), not &k{-chosen(A)}.
-chosen(A) :- action(action(A)), not &k{ chosen(A)}.

:- chosen(A), chosen(B), A!=B.
onechosen :- chosen(A).
:- not onechosen.

holds(V, Val, 0) :- chosen(A), 
  precondition(action(A), variable(V), value(variable(V), Val)).
\end{verbatim}
\normalsize

Note that we no longer need to keep track of any set of ``relevant'' facts,
since we now need to consider all the facts that appear inside the actions and
in the set $S$ of states. However, we need to open up several answer sets, one
for each state. This is done by guessing a truth value for each fact at time
step 0. Recall that \verb!contains! is part of the \emph{plasp} output, encoding all possible values for a given
variable.
\small
\begin{verbatim}
holds(V,Val,0) | -holds(V,Val,0) :- 
  variable(variable(V)), contains(variable(V),value(variable(V),Val)).

oneholds(V,0) :- holds(V,Val,0).
:- variable(variable(V)), not oneholds(V,0).
:- holds(V,Val,0), holds(V,Val1,0), Val != Val1.
\end{verbatim}
\normalsize

Next, we again guess and execute a plan, keeping track of whether the actions
were able to be applied at each particular time step:
\small
\begin{verbatim}
occurs(A, 1) :- chosen(A).
occurs(A, T) :- action(action(A)),time(T), T > 1, not  &k{-occurs(A, T)}.
-occurs(A, T) :- action(action(A)),time(T), T > 1, not  &k{occurs(A, T)}.

:- occurs(A,T), occurs(B,T), A!=B.
oneoccurs(T) :- occurs(A,T), time(T), T > 0.
:- time(T), T>0, not oneoccurs(T).

inapplicable :-  occurs(A, T),
  precondition(action(A), variable(V), value(variable(V), Val)),
  not holds(V, Val, T - 1).
:- not &k{ ~ inapplicable}.

caused(V, Val, T) :- occurs(A, T),
  postcondition(action(A), E, variable(V), value(variable(V), Val)).
modified(V, T) :- caused(V, _, T).

holds(V, Val, T) :- caused(V, Val, T).
holds(V, Val, T) :- holds(V, Val, T - 1), not modified(V, T), time(T).
\end{verbatim}
\normalsize

Again, the rules above choose a candidate reverse plan $\pi$, starting with the
action-to-be-checked $a$, as before. Furthermore, we check applicability: $\pi$
should be applicable (i.e.\ at each time step, the relevant action must have
been applied, encoded by the third block of rules above), and furthermore, only
modified facts (i.e.\ those affected by an action) can change their truth values
from time step to time step. Finally, we again need to make sure that the
guessed plan actually returns us to the original state at time step 0.
\small
\begin{verbatim}
noreversal :- holds(V, Val, 0), not holds(V, Val, H+1), horizon(H).
noreversal :- holds(V, Val, H+1), not holds(V, Val, 0), horizon(H).
:- not &k{ ~ noreversal}.
\end{verbatim}
\normalsize

This concludes the main part of our ELP encoding. In its current form, the encoding
given above produces exactly the same results as the first encoding given in
this section; that is, it checks for \emph{universal} uniform reversibility.
However, the second encoding can be easily modified in order to check uniform
$S$-reversibility. Simply add a rule of the following form to it:

\small
\begin{verbatim}
:- < check guessed state against set S >
\end{verbatim}
\normalsize

This rule should fire precisely when the current
guess (that is, the currently considered starting state) does not belong to the
set $S$. This can of course be generalized easily. For example, if set $S$ is
given as a formula $\phi$, then the rule should check whether the current guess
conforms to formula $\phi$ (i.e., encodes a model of $\phi$). Other compact
representations of $S$ can be similarly checked at this point. Hence, we have a
flexible encoding for uniform $S$-reversibility that is easy to extend with
various forms of representations of set $S$.

\paragraph{ASP Encoding.} Now, for the ASP encoding. As we will see, this is
now substantially more involved than the ELP encoding, since we need to apply
an encoding technique called \emph{saturation}, cf.~\cite{amai:EiterG95},
allowing us to express a form of universal quantification. We can start off the
same as last time, that is, with a choice rule:
\small
\begin{verbatim}
1 {chosen(A) : action(action(A))} 1.
holds(V, Val, 0) :- chosen(A),
  precondition(action(A), variable(V), value(variable(V), Val)).
affected(A, V) :- postcondition(action(A), _, variable(V), _).
\end{verbatim}
\normalsize

We note the first difference compared to the ELP encoding: we need to keep track
of all STRIPS facts that are potentially affected by an action. We assume that a
predicate \verb!opposites/2! exists that holds, in both possible orders, the
values ``true'' and ``false''. This will later be used to find the opposite
value of some STRIPS fact at a particular time step.

Next, we again guess and execute a plan:
\small
\begin{verbatim}
occurs(A, 1) :- chosen(A).
1 {occurs(A, T) : action(action(A))} 1 :- time(T), T > 1.

applied(0). % no action needs to be applied at time step 0
applicable(A, T) :- occurs(A, T), applied(T - 1),
  holds(V, Val, T - 1) :
    precondition(action(A), variable(V), value(variable(V), Val)).
applied(T) :- applicable(_, T).
holds(V, Val, T) :-  applicable(A, T),
  postcondition(action(A), _, variable(V), value(variable(V), Val)).
holds(V, Val, T) :- holds(V, Val, T - 1), occurs(A, T), applied(T),
  not affected(A, V).
\end{verbatim}
\normalsize

Note that we use the predicate \verb!affected! here to encode inertia for those
facts that are not affected by the applied action. From here on, we now see a
major difference to the ELP encoding. We need to set up our goal conditions and
then encode the universal check for all states of set $S$. First we check that
$\pi$ should be applicable (i.e.\ at each time step, the relevant action must
have been applied), and furthermore, the state at the beginning must be equal to
the state at the end.
\small
\begin{verbatim}
same(V) :- holds(V, Val, 0), holds(V, Val, horizon + 1).
samestate :- same(V) : variable(variable(V)).
planvalid :- applied(horizon + 1).
reversePlan :- samestate, planvalid.
\end{verbatim}
\normalsize

Finally, we need to specify that for all the states specified in the set $S$ the
candidate reverse plan must work. This is done as follows:
\small
\begin{verbatim}
holds(V, Val1, 0) | holds(V, Val2, 0) :-  variable(variable(V)),
  opposites(Val1, Val2), Val1 < Val2.
holds(V, Val, T) :- reversePlan, contains(variable(V), value(variable(V), Val)),
  time(T).
:- not reversePlan.
\end{verbatim}
\normalsize

As stated above, this is done using the technique of \emph{saturation}
\citep{amai:EiterG95}, allowing us to express a form of universal quantifier
that, in our case, checks that, for every state in the set $S$, we return to
the original state after applying the chosen action and the reverse plan. We
encourage the reader to refer to the relevant publication for more details on
the ``inner workings'' of this encoding technique. Again, as is, the ASP
encoding above checks \emph{universal} uniform reversibility. However, it again
can be easily modified in order to check uniform $S$-reversibility. Simply add
a rule of the following form to it, analogously to what we had for the ELP
encoding:

\small
\begin{verbatim}
reversePlan :- < check guessed state against set S >
\end{verbatim}
\normalsize

This completes the overview of our ELP and ASP encodings for uniform
reversibility.

\paragraph{Comparison.} Looking at the structure of the ELP and ASP encodings,
it is not difficult to see that they share a certain common structure. This is
not surprising, since they underlying language is the same. However, it can also
be observed that the technique of saturation, which is required (in terms of
expressive power) to encode uniform $S$-reversibility in ASP, is somewhat
non-intuitive, as it is not immediately clear, what the semantics of this
construction are. By contrast, the modal operators provided by ELPs make this
much more readable and declarative.

\subsection{Experiments}

We have conducted preliminary experiments with artificially constructed domains. The domains are as follows:

{\small
\begin{verbatim}
  (define (domain rev-i)
  (:requirements :strips)
  (:predicates (f0) ... (fi))
  (:action del-all
   :precondition (and  (f0) ... (fi) )
   :effect (and  (not (f0)) ... (not (fi))))
  (:action add-f0
   :effect (f0))
  ...
  (:action add-fi
   :precondition (fi-1)
   :effect (fi)))
\end{verbatim}
}
The action \verb!del-all! has a universal uniform reverse plan $\langle$
\verb!add-f0!, \ldots, \verb!add-fi! $\rangle$. We have generated instances from
\verb!i! $=1$ to \verb!i! $=6$ and from \verb!i! $=10$ to \verb!i! $=200$ with
step 10.  We have analyzed runtime and memory consumption of two problems: (a)
finding the unique reverse plan of size \verb!i! (by setting the constant
\verb!horizon! to \verb!i!) and proving that no other reverse plan exists, and
(b) showing that no reverse plan of length \verb!i-1! exists (by setting the
constant \verb!horizon!  to \verb!i-1!). We compare the four encodings described
in Section~\ref{sec:asprev}, and refer to the first two as the \emph{simple
ELP/ASP encoding} and to the second two as the \emph{general ELP/ASP encoding}.

We used plasp 3.1.1 (\url{https://potassco.org/labs/plasp/}), eclingo 0.2.0
(\url{https://github.com/potassco/eclingo}), and clingo 5.4.0
(\url{https://potassco.org/clingo/}) on a computer with a 2.3 GHz AMD EPYC 7601
CPU with 32 cores and 500 GB RAM running CentOS 8. We have set a timeout of 20
minutes and a memory limit of 16GB (which was never exceeded).

\begin{figure}
  \begin{tikzpicture}[scale=0.60]
\begin{axis}[
  xlabel=Number of facts,
  ylabel=Runtime (s)]
\addplot +[unbounded coords=jump] table [y=time(asp.simple),
x=size]{encodings/experiment/experiments.plan.dat};
\addlegendentry{simple ASP encoding}
\addplot +[unbounded coords=jump] table [y=time(asp.general),
x=size]{encodings/experiment/experiments.plan.dat};
\addlegendentry{general ASP encoding}
\addplot +[unbounded coords=jump] table [y=time(elp.simple),
x=size]{encodings/experiment/experiments.plan.dat};
\addlegendentry{simple ELP encoding}
\addplot +[unbounded coords=jump] table [y=time(elp.general),
x=size]{encodings/experiment/experiments.plan.dat};
\addlegendentry{general ELP encoding}
\end{axis}
\end{tikzpicture}
\qquad
  \begin{tikzpicture}[scale=0.60]
\begin{axis}[
  xlabel=Number of facts,
  ylabel=Memory (MB)]
\addplot +[unbounded coords=jump] table [y=memory(asp.simple),
x=size]{encodings/experiment/experiments.plan.dat};
\addlegendentry{simple ASP encoding}
\addplot +[unbounded coords=jump] table [y=memory(asp.general),
x=size]{encodings/experiment/experiments.plan.dat};
\addlegendentry{general ASP encoding}
\addplot +[unbounded coords=jump] table [y=memory(elp.simple),
x=size]{encodings/experiment/experiments.plan.dat};
\addlegendentry{simple ELP encoding}
\addplot +[unbounded coords=jump] table [y=memory(elp.general),
x=size]{encodings/experiment/experiments.plan.dat};
\addlegendentry{general ELP encoding}
\end{axis}
\end{tikzpicture}

    \caption{Calculating the unique reverse plan (plan length equals number of facts)}
\label{fig:experiment.exists}
\end{figure}
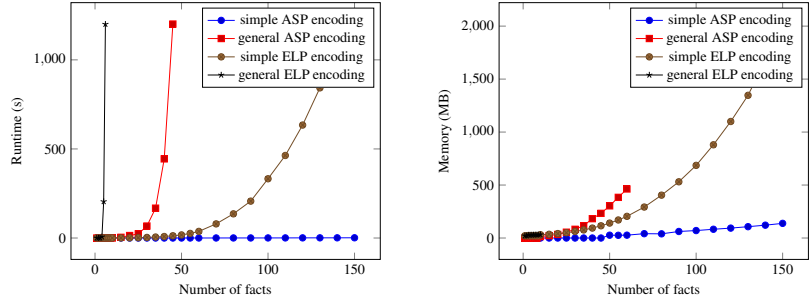

The results for problem (a) are plotted in Figure~\ref{fig:experiment.exists}.
The general ELP encoding exceeded the time limit already at the problem with
seven facts, while the simple ELP encoding could solve all problems with up to
150 facts within the time limit. The general and simple ASP encodings perform
better than their ELP counterparts, but the simple ELP encoding performed much
better than the saturation-based general ASP encoding, even though ELP solvers
are in their infancy compared to the heavily optimized ASP solving systems. The
memory consumption increased with \verb!i!  for all encodings, proportional to
the computation time.

\begin{figure}
  \begin{tikzpicture}[scale=0.60]
\begin{axis}[
  xlabel=Number of facts,
  ylabel=Runtime (s)]
\addplot +[unbounded coords=jump] table [y=time(asp.simple),
x=size]{encodings/experiment/experiments.noplan.dat};
\addlegendentry{simple ASP encoding}
\addplot +[unbounded coords=jump] table [y=time(asp.general),
x=size]{encodings/experiment/experiments.noplan.dat};
\addlegendentry{general ASP encoding}
\addplot +[unbounded coords=jump] table [y=time(elp.simple),
x=size]{encodings/experiment/experiments.noplan.dat};
\addlegendentry{simple ELP encoding}
\addplot +[unbounded coords=jump] table [y=time(elp.general),
x=size]{encodings/experiment/experiments.noplan.dat};
\addlegendentry{general ELP encoding}
\end{axis}
\end{tikzpicture}
\qquad
  \begin{tikzpicture}[scale=0.60]
\begin{axis}[
  xlabel=Number of facts,
  ylabel=Memory (MB)]
\addplot +[unbounded coords=jump] table [y=memory(asp.simple),
x=size]{encodings/experiment/experiments.noplan.dat};
\addlegendentry{simple ASP encoding}
\addplot +[unbounded coords=jump] table [y=memory(asp.general),
x=size]{encodings/experiment/experiments.noplan.dat};
\addlegendentry{general ASP encoding}
\addplot +[unbounded coords=jump] table [y=memory(elp.simple),
x=size]{encodings/experiment/experiments.noplan.dat};
\addlegendentry{simple ELP encoding}
\addplot +[unbounded coords=jump] table [y=memory(elp.general),
x=size]{encodings/experiment/experiments.noplan.dat};
\addlegendentry{general ELP encoding}
\end{axis}
\end{tikzpicture}

    \caption{Determining nonexistence of a reverse plan (plan length one step too short)}
\label{fig:experiment.notexists}

\end{figure}
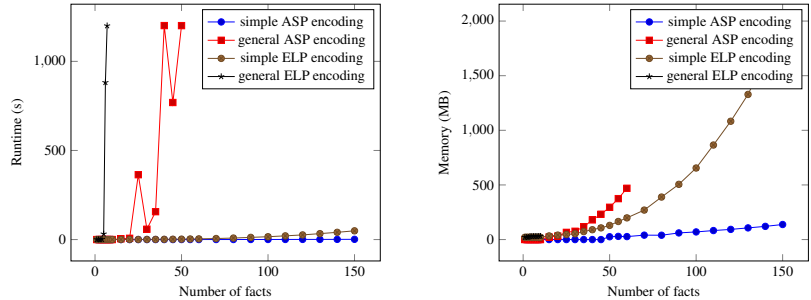

The results for problem (b) are plotted in
Figure~\ref{fig:experiment.notexists}. Interestingly, compared to (a), all the
encodings performed significantly better. While the general encodings still hit
the time limit for seven facts (ELP) and 50 facts (ASP), the simple encodings
were able to solve all the instances up to our maximum of \verb!i! $=250$ (the
figure stops at \verb!i! $=150$), but at the expense of increasing memory usage.

In total, the general encodings, for both ASP and ELP, scale worse, as
expected, since the ELP solver needs to evaluate all answer sets inside each
possible world view, and the ASP solver needs to compute the result of the
saturation check. However, for the simple encodings, especially the task of
testing for non-reversibility performed surprisingly well for both ASP and ELP.
From all of our results, however, we can see that ELP solving still severely
trails, in terms of performance, encodings for plain ASP. This was somewhat
expected, since ELP solvers are nowhere near as optimized as modern ASP
systems. We hope, however, that our results encourage further improvements in
the area of ELP solvers, since matching the ASP results, at least in this
particular benchmark set, does not seem completely out of reach.

\section{Conclusions}\label{sec:conclusions}

In this paper, we have given a review of several notions of action reversibility
in STRIPS planning, as originally presented by \cite{MorakCFF20}. We then
proceeded, on the basis of the PDDL-to-ASP translation tool \emph{plasp}, described by
\cite{DimopoulosGLRS19}, to present two ELP encodings and two ASP encodings to
solve the task of universal uniform reversibility of STRIPS actions, given a
corresponding planning domain. When given to an appropriate solving system,
these encodings, combined with the ASP translation of STRIPS planning domains
produced by \emph{plasp}, then yield a set of world views (for ELP) or answer
sets (for ASP), each one representing a (universal) reverse plan for each action
in the domain, for which such a reverse plan could be found.

The four encodings use two different approaches. The first, simpler, encoding
makes use of a shortcut that allows it to focus only on those facts that appear
in the precondition of the action to check for reversibility, as described
by~\cite{MorakCFF20}.
The second two encodings make use of the power of world views containing
multiple answer sets in ELP, and the encoding technique of saturation
as of \cite{amai:EiterG95} in ASP, respectively, which allows for encoding universal
quantifiers. These two encodings try to directly represent the original
definition of uniform reversibility: for an action to be uniformly reversible,
there must exist a plan, and this plan must revert the action in all possible
starting states (where it is applicable). Hence, the two general encodings are
more flexible insofar as they also allows for the checking of non-universal
uniform reversibility (e.g.\ to check for uniform $\phi$-reversibility, where
the starting states are given via some formula $\phi$).

In order to compare the four encodings, we performed some benchmarks on
artificially generated instances by checking whether there is an action that is
universally uniformly reversible. For the ELP and ASP communities, it will not
come as a surprise that the ELP encodings perform worse than the ASP encodings.
We see this as a call-to-action to further optimize and improve ELP solvers.
From our experiments, it seems that the performance of ASP solvers, while
significantly better, is not out of reach for ELP systems.

For future work, we intend to optimize our ELP encodings further, and test them
with other established ELP solvers. There are several competing ELP semantics
out there and several solvers are available. It would also be interesting to see
how the encodings perform when compared to a procedural implementation of the
algorithms proposed for reversibility checking by \cite{MorakCFF20}. We would also like to compare our approach to existing tools
\emph{RevPlan}\footnote{\url{http://www.kr.tuwien.ac.at/research/systems/revplan/index.html}}
(implementing techniques of \cite{japll:EiterEF08}) and \emph{undoability}
(implementing techniques of \cite{icaps:DaumT0HW16}). Furthermore, we aim to
explore how our techniques can be extended to planning languages more expressive
than STRIPS. We envision various avenues for that, one is to deal with ``lifted representations'' (going beyond propositional atoms), another one is to allow for non-deterministic action effects or exogenous events, for which ASP and ELP seem to be well-suited.

\section*{Acknowledgements}
	  Supported by the S\&T Cooperation CZ 05/2019 ``Identifying Undoable
	  Actions and Events in Automated Planning by Means of Answer Set
	  Programming'', by the Czech Ministry of Education, Youth and Sports
	  under the Czech-Austrian Mobility programme (project no. 8J19AT025),
	  by the OP VVV funded project CZ.02.1.01/0.0/0.0/16\_019/0000765
	  ``Research Center for Informatics'' and by the Czech Science
	  Foundation (project no. 18-07252S).

\bibliographystyle{acmtrans}
\bibliography{references}

\newpage
\appendix
\section{Example}

As an example, consider the following domain, which follows the pattern of the experiments:

{\small
\begin{verbatim}
(define (domain rev-2)
  (:requirements :strips)
  (:predicates (f0) (f1) )

  (:action del-all
   :precondition (and  (f0) (f1) )
   :effect (and  (not (f0)) (not (f1)) ) )

  (:action add-f0
   :effect (f0) )

  (:action add-f1
   :precondition (f0)
   :effect (f1) )
)
\end{verbatim}
}

The tool \emph{plasp} translates it to the following ASP quasi-facts:

\scriptsize
\begin{verbatim}
boolean(true).
boolean(false).

% types
type(type("object")).

% variables
variable(variable("f0")).
variable(variable("f1")).

contains(X, value(X, B)) :- variable(X), boolean(B).

% actions
action(action("del-all")).
precondition(action("del-all"), variable("f0"), value(variable("f0"), true))
  :- action(action("del-all")).
precondition(action("del-all"), variable("f1"), value(variable("f1"), true))
  :- action(action("del-all")).
postcondition(action("del-all"), effect(unconditional), variable("f0"), value(variable("f0"), false))
  :- action(action("del-all")).
postcondition(action("del-all"), effect(unconditional), variable("f1"), value(variable("f1"), false))
  :- action(action("del-all")).

action(action("add-f0")).
postcondition(action("add-f0"), effect(unconditional), variable("f0"), value(variable("f0"), true))
  :- action(action("add-f0")).

action(action("add-f1")).
precondition(action("add-f1"), variable("f0"), value(variable("f0"), true))
  :- action(action("add-f1")).
postcondition(action("add-f1"), effect(unconditional), variable("f1"), value(variable("f1"), true))
  :- action(action("add-f1")).
\end{verbatim}
\normalsize

In the simple ELP encoding, the first rules will give rise to multiple possible world views, one that contains answer sets with \verb!chosen("del-all"))!, \verb!occurs("del-all",1)!, \verb!holds("f0",true,0)!, \verb!holds("f1",true,0)!, \verb!relevant("f0")!, and \verb!relevant("f1")!, one world view that contains answer sets with \verb!chosen("add-f0")! and \verb!occurs("add-f0",1)!, and one with answer sets containing \verb!chosen("add-f1")!, \verb!occurs("add-f1",1)!, \verb!holds("f0",true,0)!, and \verb!relevant("f0")!.

When we set the constant \verb!horizon! to 2, more world views are created, based on the three mentioned above, one for each pair of actions of the three available ones. Each world view will contain at most one answer set. Many of these answer sets turn out to be invalid immediately, for instance any answer set containing \verb!occurs("add-f0",1)! and \verb!occurs("add-f1",2)! will be violating a constraint, as the precondition of \verb!add-f1! is not relevant. Others are invalidated because the preconditions of actions are not met. A few others derive \verb!noreversal!, for instance \verb!occurs("del-all",1)!, \verb!occurs("add-f0",2)!, \verb!occurs("add-f0",3)!, as we have \verb!holds("f1",true,0)! but not \verb!holds("f1",true,3)!.

We can check that the only world view with an answer set, in which \verb!noreversal! is not derived, is the one in which \verb!occurs("del-all",1)!, \verb!occurs("add-f0",2)!, \verb!occurs("add-f1",3)! hold. Indeed, \verb!del-all! is the only universally uniformly reversible action, and its only reverse plan of length 2 is $\langle$ \verb!add-f0!, \verb!add-f1! $\rangle$.

The simple ASP encoding works in a very similar way. Since the simple ELP encoding has at most one answer set per world view, we simple turn the ``epistemic guesses'' into ``standard guesses'', so instead of an answer set encapsulated in a world view, it is just an answer set, and also there, one answer set exists for the example, in which \verb!occurs("del-all",1)!, \verb!occurs("add-f0",2)!, \verb!occurs("add-f1",3)! hold.

Concerning the general ELP encoding, similar world views as above are created. But in that encoding, multiple answer sets can exist in a world view: for each variable not in the precondition of the chosen action, there will be answer sets in which the variable is true, and answer sets in which it is false. So, any world view, in which \verb!chosen("del-all"))!, \verb!occurs("del-all",1)!, \verb!holds("f0",true,0)!, \verb!holds("f1",true,0)! hold, will still have at most a single answer set, as all variables occur in the precondition of \verb!del-all!. It is easy to see that the reverse plan is then in a single-answer-set world view similar to the one in the simple ELP encoding.

On the other hand, in world views containing \verb!chosen("add-f0")! and \verb!occurs("add-f0",1)! four potential answer sets can exist, one with \verb!holds("f0",true,0)! and \verb!holds("f1",true,0)!, one with \verb!holds("f0",false,0)! and \verb!holds("f1",true,0)!, one with \verb!holds("f0",true,0)! and \verb!holds("f1",false,0)!, and one with \verb!holds("f0",false,0)!, \verb!holds("f1",false,0)!.

Let us have a look at the world view containing \verb!occurs("add-f0",1)!, \verb!occurs("del-all",2)!, \verb!occurs("del-all",3)!. For this, \verb!inapplicable! will be derived because the preconditions for \verb!occurs("del-all",3)! are not met in any of the answer sets, and also because the precondition for \verb!occurs("del-all",2)! is not met in those answer sets in which \verb!holds("f1",false,0)! is true. Therefore, the constraint \verb!:- not &k{ ~ inapplicable}.! is violated for this world view.

Maybe more interesting is the world view containing \verb!occurs("add-f0",1)!, \verb!occurs("add-f1",2)!, \verb!occurs("del-all",3)!. Here, \verb!inapplicable! will not be derived, as the preconditions of the actions hold in all answer sets. But consider the answer set containing \verb!holds("f0",true,0)! and \verb!holds("f1",true,0)!: neither \verb!holds("f0",true,3)! nor \verb!holds("f1",true,3)! is derived, so \verb!noreversal! is derived. \verb!noreversal! is also true in the other answer sets except the one containing \verb!holds("f0",false,0)! and \verb!holds("f1",false,0)!. The constraint \verb!:- not &k{ ~ noreversal}.! is thus violated for this world view.

The general ASP encoding works in a rather different way. Here, one candidate answer set will be created for each action to be reversed, one completion of the initial state and one candidate reverse plan.

So there is still only one answer set candidate containing \verb!chosen("del-all"))!, \verb!occurs("del-all",1)!, \verb!holds("f0",true,0)!, \verb!holds("f1",true,0)!. For \verb!occurs("add-f0",1)!, \verb!occurs("del-all",2)!, \verb!occurs("del-all",3)! there will be four answer set candidates, similar to the four answer sets in the world view in the general ELP encoding, similar for \verb!occurs("add-f0",1)!, \verb!occurs("del-all",2)!, \verb!occurs("del-all",3)!, there will also be four answer set candidates.

Let us first see what happens with the reverse plan answer set candidate containing \verb!chosen("del-all"))!, \verb!occurs("del-all",1)!, \verb!holds("f0",true,0)!, \verb!holds("f1",true,0)!. Eventually, \verb!samestate! and \verb!planvalid!, and \verb!reversePlan! are derived for this answer set, which means that \verb!holds(V,Val,T)! will be derived for all combinations of variables, values and times. It should be noted that doing this will not invalidate any derivation done earlier, so it is an answer set.

Now consider the answer set candidates containing \verb!occurs("add-f0",1)!, \verb!occurs("del-all",2)!, \verb!occurs("del-all",3)!. In none of these, \verb!planvalid! will be derived, as either the preconditions of \verb!occurs("del-all",2)! are not met (in answer set candidates having \verb!holds("f1",false,0)!) or the preconditions of \verb!occurs("del-all",3)! are not met. So, \verb!reversePlan! will not hold in any of these answer set candidates, which means that they all violate the constraint \verb!:- not reversePlan.!

Now for the answer set candidates containing \verb!occurs("add-f0",1)!, \verb!occurs("add-f1",2)!, \verb!occurs("del-all",3)!, \verb!planvalid! will be derived in all of them, but \verb!samestate! only in the one containing \verb!holds("f0",false,0)! and \verb!holds("f1",false,0)!. This means, that in the other three answer set candidates \verb!reversePlan! will not hold, violating  the constraint \verb!:- not reversePlan.! For the remaining one, the constraint is satisfied, however the saturation rule derives \verb!holds(V,Val,T)! for all combinations of variables, values and times. This ``inflated'' answer set candidate is not stable any longer: we can form a subset of it, in which exactly the atoms of one of the other three constraint-violating candidates are true plus \verb!reversePlan!; the obtained interpretation also satisfies the program and therefore is a counterexample for the stability of the saturated interpretation. One might object that \verb!reversePlan! is unsupported in the counterexample, but supportedness is not a requirement for countermodels. So none of the candidate answer sets containing \verb!occurs("add-f0",1)!, \verb!occurs("add-f1",2)!, \verb!occurs("del-all",3)! turned out to be  answer sets (for quite different reasons).

%\newpage
%\section{Responses to Reviews}
%\VerbatimInput[breaklines=true]{responses-round1.txt}
\label{lastpage}
\end{document}